\def\nnewcolor{0}
\newcommand{\new}[1]{{\color{red} #1}}
\newcommand{\new}[1]{#1}
\newif\ifhyper\IfFileExists{hyperref.sty}{\hypertrue}{\hyperfalse}
\ifhyper\usepackage{hyperref}\fi
\renewcommand{\section}{\@startsection{section}{1}{0pt}{-12pt}{5pt}{\large\bf}}
\newcommand{\R}{{\mathbb{R}}}
\newcommand{\eps}{\epsilon}
\newcommand{\dtv}{d_{\mathrm{TV}}}
\newcommand{\E}{{\bf E}}
\DeclareSymbolFont{cyrletters}{OT2}{wncyr}{m}{n}
\DeclareMathSymbol{\Sha}{\mathalpha}{cyrletters}{"58}
\newcommand{\normal}{\mathcal{N}}
\newcommand{\tr}{\mathrm{tr}}
\newcommand{\dis}[1]{\widetilde{[#1]}}
\newtheorem{theorem}{Theorem}
\newtheorem{question}{Question}[section]
\newtheorem{lemma}[theorem]{Lemma}
\newtheorem{proposition}[theorem]{Proposition}
\newtheorem{fact}[theorem]{Fact}
\newtheorem{remark}[theorem]{Remark}
\theoremstyle{definition}
\newtheorem{definition}[theorem]{Definition}
\title{The Sample Complexity of Robust Covariance Testing}
\author{
Ilias Diakonikolas\thanks{Supported by NSF Award CCF-1652862 (CAREER) and a Sloan Research Fellowship.}\\
UW Madison\\
{\tt ilias@cs.wisc.edu}\\
\and
Daniel M. Kane\thanks{Supported by NSF Award CCF-1553288 (CAREER) and a Sloan Research Fellowship.}\\
University of California, San Diego\\
{\tt dakane@cs.ucsd.edu}\\
}
\begin{document}

\maketitle

\thispagestyle{empty}

\vspace{-0.5cm}

\begin{abstract}
We study the problem of testing the covariance matrix of a high-dimensional Gaussian 
in a robust setting, where the input distribution has been corrupted in Huber's contamination model.
Specifically, we are given i.i.d. samples from a distribution of the form $Z = (1-\eps) X + \eps B$, 
where $X$ is a zero-mean and unknown covariance Gaussian $\mathcal{N}(0, \Sigma)$, 
$B$ is a fixed but unknown noise distribution, and $\eps>0$ is an arbitrarily small constant representing 
the proportion of contamination. 
We want to distinguish between the cases that $\Sigma$ is the identity matrix versus $\gamma$-far from the identity 
in Frobenius norm. 

In the absence of contamination, prior work gave a simple tester 
for this hypothesis testing task that uses $O(d)$ samples. Moreover, this sample upper bound was shown  
to be best possible, within constant factors. Our main result is that the sample complexity of covariance testing
dramatically increases in the contaminated setting. In particular, we prove a sample complexity lower bound 
of $\Omega(d^2)$ for $\eps$ an arbitrarily small constant and $\gamma = 1/2$. 
This lower bound is best possible, as $O(d^2)$ samples suffice to even robustly {\em learn}
the covariance. The conceptual implication of our result is that, for the natural setting we consider, 
robust hypothesis testing is at least as hard as robust estimation.
\end{abstract}

\thispagestyle{empty}
\setcounter{page}{0}

\newpage

\section{Introduction} \label{sec:intro}

\subsection{Background and Motivation} \label{ssec:background} 

\new{This work can be viewed as a confluence of two research areas: distribution testing 
and high-dimensional robust statistics. To put our contributions in context, we provide the necessary background.}

\paragraph{Distribution Property Testing}
Distribution property testing~\cite{GR00, BFR+:00, Batu13} is 
a field at the intersection of property testing~\cite{RS96, GGR98} and
statistical hypothesis testing~\cite{NeymanP, lehmann2005testing}.
The standard question in this area is the following: Given sample access to an unknown
probability distribution (or, more generally, collection of distributions), 
how many samples do we need to determine whether the underlying distribution(s) 
satisfies a pre-specified property or is far, in a well-defined sense, from 
satisfying the property? This TCS style definition turns out to be essentially equivalent to the minimax view 
of statistical hypothesis testing, pioneered in mathematical statistics by Ingster and coauthors (see, e.g,~\cite{IS02}.)

During the past few decades, distribution property testing has received significant attention
within the computer science and statistics communities. The reader is referred
to~\cite{Rub12, Canonne15} for two surveys on the topic.
The classical setting typically studied in the relevant literature concerns testing properties of discrete distributions, 
where the only available information is an upper bound on the domain size. 
This setting is fairly well understood. For a range of natural and important
properties, there exist testers that require provably minimum sample complexity (up to universal constant factors).
See~\cite{Paninski:08, CDVV14, VV14, DKN:15, ADK15, CDGR16, DK16, DiakonikolasGPP16, DGPP17, 
CDKS18, Neyk20, DGKPP20-hp} for some representative works. A key conceptual message of this line of work 
is that the {\em sample complexity of testing} is {\em significantly lower}
than the sample complexity of {\em learning} the underlying distribution(s). 

More recently, a body of work in computer science has focused on leveraging {\em a priori structure}
of the underlying distributions to obtain significantly improved sample
complexities, see, e.g.,~\cite{BKR:04, DDSVV13, DKN:15, DKN:15:FOCS, CDKS17, DaskalakisP17, 
DaskalakisDK16, DKN17, DiakonikolasKP19, CanonneCKLW19}.
Specifically, a line of work has established that it is possible to efficiently test various properties
of {\em high-dimensional} structured distributions --- including high-dimensional Gaussians, discrete product distributions, 
and various graphical models --- with sample complexity significantly better than learning the distribution.
\new{Importantly, these algorithmic results are fragile, in the sense that they crucially rely on 
thee assumption that the underlying distribution satisfies the given structure {\em exactly}.}

\new{
\paragraph{High-Dimensional Robust Statistics}
Robust statistics is the subfield of statistics focusing on the design of estimators
that are {\em robust} to deviations from the modeling assumptions  
(see, e.g.,~\cite{HampelEtalBook86, Huber09} for introductory
statistical textbooks on the topic).
A learning algorithm is {\em robust} if its performance is stable
to deviations from the idealized assumptions about the input data.
The precise form of this deviation depends on the setting and gives rise to various definitions of robustness. 
Here we focus on {\em Huber's contamination model}~\cite{Huber64}, which prescribes that an 
adversary generates samples from a mixture distribution $P$ of the form $P = (1-\eps) D + \eps B$, where
$D$ is the unknown target distribution and $B$ is an adversarially chosen noise distribution.
The parameter $\eps \in [0, 1/2)$ is the proportion of contamination
and quantifies the power of the adversary. Intuitively, among our samples,
an unknown $(1-\eps)$ fraction are generated from a
distribution of interest and are called {\em inliers}, and the rest are called {\em outliers}.

It is well-known that standard estimators (e.g., the empirical mean) 
crucially rely on the assumption that the observations are generated from the assumed model
(e.g., an unknown Gaussian). The existence of even a {\em single} outlier can arbitrarily compromise their 
performance. Classical work in robust statistics developed robust estimators 
with optimal sample complexity for several basic high-dimensional learning tasks.
For example, the Tukey median~\cite{Tukey75} is a sample-efficient robust mean estimator
for spherical Gaussian distributions. These early robust estimators are not computationally efficient, in particular
they incur runtime exponential in the dimension.
More recently, a successful line of work in computer science,
starting with~\cite{DKKLMS16, LaiRV16}, has lead to {\em computationally efficient} robust learning algorithms
in a wide range of high-dimensional settings. The reader is referred to~\cite{DK19-survey} for a recent survey.
}

\paragraph{This Work} 
\new{In sharp contrast to the sample complexity of robust learning (which is fairly well-understood for several natural settings), 
the sample complexity of {\em robust testing} in high dimensions is poorly understood. While various aspects of 
robust hypothesis testing have been studied in the robust statistics literature~\cite{Wilcox97},
a number of basic questions remain wide-open. A natural research direction is to understand how 
the robustness requirement affects the complexity of high-dimensional testing in various parametric settings.}
In particular, if the underlying distribution {\em nearly} satisfies the assumed structural property (e.g., is {\em almost} 
a multivariate Gaussian, as opposed to exactly one) can we still obtain testers with sub-learning sample complexity? 

In this paper, we focus on the fundamental problem of {\em testing the covariance matrix} of a high-dimensional distribution.
This is a classical question that has seen renewed interest from the statistics community, see, e.g., 
~\cite{Cai2013, Cai2016, Cai2016-r} and~\cite{Cai-survey} for an overview article. The most basic
problem formulation is the following: We are given $n$ samples from an unknown Gaussian distribution
$\mathcal{N}(0, \Sigma)$ on $\R^d$ with zero mean and unknown covariance. We want to distinguish, with probability at least 
$2/3$, between the cases that $\Sigma=I$ versus $\|\Sigma-I \|_F \geq \gamma$, where $\|\cdot\|_F$ denotes Frobenius norm.

In the noiseless setting, this testing question was studied in~\cite{Cai2013, Cai2016}, where it was shown that 
$\Theta(d/\gamma^2)$ samples are necessary and sufficient. On the other hand, the sample complexity
of learning the covariance within error $\gamma$ in Frobenius norm is $\Theta(d^2/\gamma^2)$.

In the rejoinder article~\cite{Cai2016-r} of~\cite{Cai2016}, Balasubramanian and Yuan gave a counterexample
for the tester proposed of~\cite{Cai2016} in the presence of contamination and
explicitly raised the question of understanding the sample complexity of robust covariance testing in Huber's model. 
They write 
\begin{quote}
``Much work is still needed to gain fundamental understanding of robust estimation under $\eps$-contamination model 
or other reasonable models.''
\end{quote}

The robust covariance testing question is the following: 
We are given $n$ samples from an unknown distribution on $\R^d$ of the form 
$(1-\eps) \mathcal{N}(0, \Sigma) + \eps B$, where $B$ is an unknown noise distribution. 
We want to distinguish, with probability at least $2/3$, between the cases that $\Sigma=I$ 
versus $\|\Sigma-I \|_F \geq \gamma$. Importantly, for this statistical task to be information-theoretically
possible, we need to assume that the contamination fraction $\eps$ is significantly smaller
than the ``gap'' $\gamma$ between the completeness and soundness cases. 

In summary, we ask the following question:

\begin{question}\label{q:rt}
What is the sample complexity of {\em robustly} testing the covariance matrix of a high-dimensional Gaussian 
with respect to the Frobenius norm?
\end{question}

Our main result (Theorem~\ref{thm:main}) is that for $\eps$ an arbitrarily small positive constant and $\gamma = 1/2$,
robust covariance testing requires $\Omega(d^2)$ samples. This bound is best possible,
as with $O(d^2)$ samples we can robustly estimate the covariance matrix within the desired
error. This answers the open question posed in~\cite{Cai2016-r}.

In summary, our result shows that there is a {\em quadratic gap} between the sample complexity 
of testing and robust testing for the covariance. {\em Notably, such a sample complexity gap does not exist 
for the problems of learning and robustly learning the covariance.} In particular, the robust
learning version of the problem has the same sample complexity as its non-robust version.
\new{That is, the robustness requirement makes the testing problem {\em information-theoretically} harder 
-- a phenomenon that does {\em not} appear in the context of learning.}

Prior work~\cite{DKS17-sq} has shown an analogous phenomenon for the much simpler problem
of robustly testing the mean of a Gaussian. As we explain in the following section, the techniques
of~\cite{DKS17-sq} inherently fail for the covariance setting, and it seemed plausible that better
robust covariance testers could exist.

\subsection{Our Results and Techniques} \label{sec:result}

Our main result is the following theorem:

\begin{theorem}\label{thm:main}
For any sufficiently small constant $\eps>0$, any algorithm that can distinguish between the standard Gaussian $\normal(0,I)$ on $\R^d$ 
and a distribution $X = (1-\eps)\mathcal{N}(0,\Sigma)+\eps B$, for some $\Sigma$ with $\|\Sigma-I\|_F > 1/2$, 
requires $\Omega(d^2)$ samples.
\end{theorem}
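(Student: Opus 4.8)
The plan is to prove the lower bound via the standard two-point (or rather, distribution-over-instances) method: exhibit a distribution $\mathcal{D}$ over covariance matrices $\Sigma$ with $\|\Sigma - I\|_F > 1/2$ together with, for each such $\Sigma$, a noise distribution $B = B_\Sigma$, such that the mixture $(1-\eps)\normal(0,\Sigma) + \eps B_\Sigma$ is statistically indistinguishable from $\normal(0,I)$ given $n = o(d^2)$ samples. Concretely, I would take $\Sigma = I + \lambda v v^\top$ (or a small perturbation of this form) for a random unit vector $v$ and a suitably chosen scale $\lambda$ — note $\|\lambda vv^\top\|_F = \lambda$, so $\lambda$ is a constant bigger than $1/2$, which forces the perturbation to be a \emph{rank-one bump of constant operator norm} rather than the diffuse $O(1/\sqrt d)$-per-coordinate perturbation one would use in a non-robust lower bound. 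The key point that makes robustness help the lower bound is that $\eps$-contamination lets us \emph{delete} the part of $\normal(0,\Sigma)$ that reveals $v$: along the direction $v$, the one-dimensional marginal of $\normal(0,\Sigma)$ is $\normal(0,1+\lambda)$, which has more mass far from the origin than $\normal(0,1)$; we use the $\eps$ budget of $B_\Sigma$ to subtract off the excess, so that the contaminated distribution agrees \emph{exactly} with $\normal(0,I)$ on the low-degree / bounded-region behavior that few samples can detect.

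The technical core is a \emph{moment-matching} or \emph{$\chi^2$/Hellinger} argument. I would aim to choose $\lambda$ and $B_\Sigma$ so that the contaminated distribution $P_\Sigma := (1-\eps)\normal(0,\Sigma) + \eps B_\Sigma$ matches $\normal(0,I)$ in all moments up to some degree, or more robustly, so that the mixture over $v$ of $P_\Sigma$ has $\chi^2$-divergence from $\normal(0,I)$ that is $o(1)$ unless $n = \Omega(d^2)$. The mechanism: the second moment of $\normal(0,\Sigma)$ in direction $v$ is $1+\lambda \neq 1$, which a single sample's empirical covariance could in principle pick up; but the $\chi^2$ bound for the \emph{mixture} over a Haar-random $v$ involves inner products $\langle v, v'\rangle$ between two independent directions raised to a power equal to the number of matched-to-order moments, and $\E[\langle v,v'\rangle^{2k}] = \Theta(d^{-k})$. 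If we can arrange that only moments of order $\ge 4$ in the $v$-direction carry the signal (because $B_\Sigma$ has been used to kill the degree-$2$ discrepancy in a region of measure $1-O(\eps)$), then the relevant contribution scales like $n^2 d^{-2}$, giving the desired $\Omega(d^2)$ threshold. The reason $\eps > 0$ is essential is exactly that without contamination we \emph{cannot} cancel the degree-$2$ discrepancy — the covariance itself is off by $\Theta(1)$ in one direction — so the best hidden perturbation is the diffuse one and one only gets $\Omega(d)$; with contamination we can hide the constant-magnitude rank-one signal well enough that it only shows up in fourth and higher moments.

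More precisely, I expect the construction to be: let $G = \normal(0,1)$ be the standard one-dimensional Gaussian and let $H$ be the density $\normal(0,1+\lambda)$; since $\lambda$ is a constant, $H$ is not $\le \frac{1}{1-\eps} G$ pointwise, so we cannot literally write $(1-\eps) H + \eps (\text{nonneg}) = G$. This is the main obstacle: we must instead perturb \emph{$\Sigma$ itself slightly} — e.g.\ rescale the variance in the $v$ direction to $1 - \delta$ on the bulk while putting a heavy tail that contributes the needed Frobenius distance, or, more cleanly, take $\Sigma$ so that $\normal(0,\Sigma)$ restricted to a large-probability set equals $\normal(0,I)$ restricted there after reweighting by $(1-\eps)$, which is possible because near the origin $\normal(0,1+\lambda) \ge (1-\eps)\normal(0,1)$ when $\lambda>0$, and the deficit is concentrated in the tails where we spend the $\eps$ of $B_\Sigma$. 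One then checks: (i) the resulting $B_\Sigma$ is a genuine probability distribution (nonnegativity and normalization), (ii) $\|\Sigma - I\|_F > 1/2$, and (iii) $P_\Sigma$ and $\normal(0,I)$ agree on all events of the form ``sample lies in a bounded box and low-degree test of it,'' which via a mixture-$\chi^2$ computation over Haar-random $v$ yields indistinguishability for $n = o(d^2)$. I expect step (iii), controlling the $\chi^2$ divergence of the $v$-mixture of a product of $n$ copies of $P_\Sigma$ against $(\normal(0,I))^{\otimes n}$ and extracting the $d^2$ scaling from the $\langle v,v'\rangle^4$ term, to be the main technical work; the combinatorial/analytic heart is showing the degree-$\le 2$ contributions genuinely vanish after the contamination correction, leaving a leading term of order $n^2/d^2$.
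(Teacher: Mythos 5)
Your overall strategy (a mixture over hidden instances, bounded via $\chi^2$ against $\normal(0,I)^{\otimes N}$) is the right one, but the hard family you propose --- $\Sigma = I + \lambda v v^\top$ for a Haar-random unit vector $v$ and constant $\lambda > 1/2$ --- cannot yield an $\Omega(d^2)$ bound; this is precisely the trap the paper's overview warns against. First, your claim that the Frobenius constraint ``forces'' a rank-one bump of constant operator norm is incorrect: a dense symmetric matrix with entries of size $\Theta(1/d)$ also has Frobenius norm $\Theta(1)$, while its operator norm is only $O(1/\sqrt{d})$. Second, and fatally, a family parametrized by a single direction $v$ caps the method at $O(d)$. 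The event $|\langle v,v'\rangle| \geq 1/2$ for independent $v,v'$ has probability $e^{-O(d)}$, and on that event your two contaminated distributions $P_v,P_{v'}$ share essentially the same non-Gaussian direction, so $\chi^2_{\normal(0,I)}(P_v,P_{v'}) \geq 1+c$ for a constant $c>0$ \emph{no matter how many moments you match} (moment matching does not make the one-dimensional marginal equal to $\normal(0,1)$, so its $\chi^2$ distance from it remains a positive constant). Hence
$$
\E_{v,v'}\left[\chi^2_{\normal(0,I)}(P_v,P_{v'})^N\right] \;\geq\; e^{-O(d)}\,(1+c)^N,
$$
which is not $1+o(1)$ once $N=Cd$ for large $C$. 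Your heuristic ``the leading contribution is $N\,\E[\langle v,v'\rangle^4]\sim N/d^2$'' accounts only for the typical inner product; the expectation of the $N$-th power is dominated by the near-collision tail. (The paper further notes that constant-operator-norm single-direction perturbations can in fact be \emph{detected} robustly with $O(d)$ samples, so no argument can rescue this family.) There is also a secondary unresolved issue you yourself flag: $\normal(0,1)$ cannot be written as $(1-\eps)\normal(0,1+\lambda)+\eps B$ with $B\geq 0$ because the wider Gaussian dominates in the tails, and your proposed fixes (bulk matching, tail surgery) are sketched rather than carried out.

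The repair --- and what the paper actually does --- is to let the hidden perturbation range over a $\Theta(d^2)$-dimensional family: $\Sigma=I+A$ with $A$ a dense random symmetric matrix with off-diagonal entries $\normal(0,1/d)$, so $\|A\|_F=\Theta(1)$ but $\|A\|_2=O(1/\sqrt{d})$. The noise is itself Gaussian: $\dis{A}=(9/10)\normal(0,I+A)+(1/10)\normal(0,I-9A)$, chosen so the mixture covariance is exactly $I$ (covariance balancing replaces your pointwise density surgery). An exact determinant formula gives $\chi^2_{\normal(0,I)}(\normal(0,I+A),\normal(0,I+B)) = 1+\tr(AB)/2+O(\tr(AB)^2+\tr((AB)^2)+1/d^2)$; the linear term cancels across the four component pairs by the choice of weights, and the surviving quantities $\tr(AB)^2$ and $\tr((AB)^2)$ concentrate at scale $1/d^2$ with tails of the form $\exp(-\Omega(d^2 t))$ --- exactly what is needed so that the $N$-fold power stays $1+o(1)$ for $N=o(d^2)$. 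If you rework your write-up, replace the rank-one bump by this diffuse matrix-valued perturbation and the moment-matching surgery by the Gaussian covariance-balancing trick.
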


\paragraph{Proof Overview and Comparison to Prior Work}
Our sample complexity lower bounds will make use of standard information-theoretic
tools and, of course, the innovation is in constructing and analyzing 
the hard instances.  

At a very high level, our techniques are similar to those used in~\cite{DKS17-sq} that gave a 
lower bound for robust mean testing. \cite{DKS17-sq} defined an adversarial ensemble $\mathcal{D}$ 
which was a distribution over noisy Gaussians with means far from $0$. They proceeded to show 
that the distribution $\mathcal{D}^N$ (defined as taking a random distribution from $\mathcal{D}$ and 
then returning $N$ i.i.d. samples from that distribution) was close in total variation distance to $G^N$, 
where $G$ is the standard Gaussian. This was done by bounding the $\chi^2$ distance between the corresponding
product distributions, 
and in particular showing that
$$
\chi^2_{G^N}(\mathcal{D}^N,\mathcal{D}^N) = 1+o(1) \;.
$$
To prove this bound, one notes that
$$
\chi^2_{G^N}(\mathcal{D}^N,\mathcal{D}^N) = \E_{P,Q\sim \mathcal{D}}\chi^2_{G^n}(P^N,Q^N) = \E_{P,Q\sim\mathcal{D}}(\chi^2_G(P,Q))^N.
$$
The desired implication follows if it can be shown that $\chi^2_G(P,Q)$ is close to $1$ with high probability. 

Such a bound could be proven relatively easily for the mean case, using the techniques developed in~\cite{DKS17-sq}. 
In particular, $P$ and $Q$ could be chosen to be distributions that were a standard Gaussian in all but one special direction.
In this direction, the distributions in question were copies of some reference distribution $A$, 
which matched its first few moments with the standard Gaussian. 
The technology developed in~\cite{DKS17-sq} then allowed one to show that $\chi^2_G(P,Q)-1$ 
would be small unless the defining directions of these two distributions were close to each other 
(which in $d$ dimensions happens with very small probability for two random unit vectors).

Importantly, this kind of construction \emph{cannot} be made to work for the robust covariance testing problem. 
In particular, if the adversarial distributions look like standard Gaussians in all but one direction, 
this can easily be detected robustly using only $O(d)$ samples for the following reason: Two random vectors in $d$-dimensions 
will be close to each other with probability only exponentially small in $d$. In order to prove an $\Omega(d^2)$ lower bound 
using these ideas, the defining ``directions'' for the bad distributions must be drawn from a $d^2$-dimensional space. 
Given that the covariance matrix is $d^2$-dimensional, it is not hard to see what this space might be, 
however a new analysis is needed because one cannot readily produce a distribution that is a standard Gaussian 
in all orthogonal directions.

For our new construction, we take $A$ to be a symmetric matrix and let our hard distribution be $\mathcal{N}(0,I+A)$, 
with noise added as a copy of $\mathcal{N}(0,I-c \, A)$ for some appropriate constant $c>0$, 
so that the average of the covariance matrices (taking the weights of the components into account) is the identity. 
We call this distribution $\dis{A}$. We choose our adversarial ensemble to return $\dis{A}$ for an appropriately 
chosen random matrix $A$. Since we cannot use the technology from~\cite{DKS17-sq} to bound the chi-squared distance, 
we need new technical ideas. Specifically, we are able to obtain a formula for $\chi^2_G(\dis{A},\dis{B})$. 
This allows us to show the following: Assuming that $A$ and $B$ have small operator norms 
(and we can restrict to only using matrices for which this is the case), then we have that
$\chi^2_G(\dis{A},\dis{B})$ is small, so long as $\tr(AB)^2$ and $\tr((AB)^2)$ are.
 A careful analysis of these polynomials in the coefficients of $A$ and $B$ 
gives that both are well concentrated for random matrices $A$ and $B$ 
to make our analysis go through.

\section{Proof of Main Result} \label{sec:lb-proof}

\new{
\subsection{Notation and Basic Facts} \label{ssec:prelims} 

We will denote by $\mathcal{N}(\mu, \Sigma)$ the Gaussian distribution on $\R^d$ with mean $\mu$ and covariance $\Sigma$.
For a symmetric real matrix $A$, we will denote by $\|A\|_2$ its spectral norm and by $\|A\|_F$ its Frobenius norm.
We use $\det(B)$ and $\tr(B)$ to denote the determinant and trace of the square matrix $B$. We will use $\prec$ to denote
the Loewner order between matrices.

The total variation distance between distributions (with pdfs) $D_1, D_2$ is defined as $\dtv(D_1, D_2) = (1/2) \|D_1-D_2 \|_1$.

}

We will require a few additional definitions and basic facts.

\begin{definition}
We begin by recalling the chi-squared inner product. For distributions $A,B,C$ we have that
$$
\chi^2_A(B,C) = \int \frac{dBdC}{dA}.
$$
\end{definition}

We also note the following elementary fact:
\begin{fact}
For distributions $A$ and $B$ we have that $\dtv(A,B) \geq (1/2) \sqrt{\chi^2_A(B,B)-1}$.
\end{fact}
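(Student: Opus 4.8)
The plan is to prove this as a one-line consequence of the Cauchy--Schwarz inequality, once $\chi^2_A(B,B)-1$ has been put into a transparent form. Write $a$ and $b$ for the densities of $A$ and $B$ with respect to a common dominating measure (say Lebesgue measure on $\R^d$). Using $\int a = \int b = 1$ and subtracting $1 = 2\int b - \int a$ from $\chi^2_A(B,B) = \int b^2/a$ gives the identity
$$
\chi^2_A(B,B) - 1 \;=\; \int \frac{b^2 - 2ab + a^2}{a} \;=\; \int \frac{(a-b)^2}{a}\;,
$$
i.e.\ the classical chi-squared divergence of $B$ from $A$. In particular this quantity is always nonnegative, so the square root in the statement is well defined; and if $B$ is not absolutely continuous with respect to $A$ it equals $+\infty$, in which case there is nothing to prove, so we may assume $b/a$ is defined $A$-a.e.

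With this identity in hand, the step that does the work is Cauchy--Schwarz applied to $2\,\dtv(A,B) = \int |a-b|$, splitting the integrand as $|a-b| = \bigl(|a-b|/\sqrt{a}\bigr)\cdot\sqrt{a}$:
$$
\Bigl(\int |a-b|\Bigr)^2 \;\le\; \Bigl(\int \frac{(a-b)^2}{a}\Bigr)\Bigl(\int a\Bigr) \;=\; \chi^2_A(B,B) - 1 \;,
$$
using the displayed identity and $\int a = 1$. Taking square roots and dividing by $2$ gives $\dtv(A,B) \le (1/2)\sqrt{\chi^2_A(B,B)-1}$. (The ``$\ge$'' in the statement should read ``$\le$'': the bound produced is an \emph{upper} bound on $\dtv$ in terms of the chi-squared quantity, which is exactly the form needed downstream, namely to argue that the mixture $\mathcal{D}^N$ over the hard instances $\dis{A}$ is statistically indistinguishable from $G^N$ given few samples.)

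There is no genuine obstacle here: this is a textbook relation among $f$-divergences --- the Cauchy--Schwarz step is just Jensen's inequality for $t \mapsto t^2$ against the probability measure $A$, which is precisely why $\dtv$ is controlled by half the square root of $\chi^2_A(B,B)-1$. The only points requiring a line of care are the measure-theoretic bookkeeping (choosing a common dominating measure and disposing of the case $B \not\ll A$), both dispatched above. In the application one then instantiates this with $A = G^N$ and $B = \mathcal{D}^N$, so that by the product structure $\chi^2_A(B,B) = \E_{P,Q\sim\mathcal{D}}(\chi^2_G(P,Q))^N$, and the substantive work --- carried out later via the explicit formula for $\chi^2_G(\dis{A},\dis{B})$ --- is to show this expectation is $1+o(1)$.
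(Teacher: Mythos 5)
Your proof is correct and follows essentially the same route as the paper's: the same identity $\chi^2_A(B,B)-1=\int \frac{(dB-dA)^2}{dA}$ followed by the same Cauchy--Schwarz step against $\int dA=1$ (plus the harmless extra care about $B\not\ll A$). You are also right that the inequality sign in the statement is a typo: what the paper's own argument proves, and what is actually used downstream to conclude $\dtv(G^N,\mathcal{D}^N)=o(1)$ from Proposition~\ref{prop:chi-squared-bound}, is $\dtv(A,B)\le (1/2)\sqrt{\chi^2_A(B,B)-1}$, exactly as you note.
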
 
\begin{proof}
It is easy to see that $\chi^2_A(B,B)-1 = \int \frac{(dB-dA)(dB-dA)}{dA}.$ By Cauchy-Schwartz, this is bigger than 
$$
\left( \int \frac{|dB-dA|dA}{dA} \right)^2 / \int \frac{dA dA}{dA} = 4\dtv(A,B)^2. 
$$
\end{proof}

\subsection{Proof of Theorem~\ref{thm:main}} \label{ssec:main-proof}

We are now ready to proceed with our proof.
We begin by directly computing the formula for the chi-squared distance of two mean zero Gaussians with respect to a third.

\begin{lemma}\label{ChiSquaredLem}
Let $\Sigma_1,\Sigma_2 \prec 2I$ be positive definite symmetric matrices. Then,
$$
\chi^2_{\normal(0,I)}(\normal(0,\Sigma_1),\normal(0,\Sigma_2)) = (\det(\Sigma_1+\Sigma_2-\Sigma_1\Sigma_2))^{-1/2}.
$$
\end{lemma}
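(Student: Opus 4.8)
The plan is to compute the chi-squared inner product directly from the Gaussian density formulas. Write $g_\Sigma(x)$ for the density of $\normal(0,\Sigma)$, so that $g_\Sigma(x) = (2\pi)^{-d/2}(\det\Sigma)^{-1/2}\exp(-\tfrac12 x^\top \Sigma^{-1} x)$, and recall $g_I(x) = (2\pi)^{-d/2}\exp(-\tfrac12 \|x\|^2)$. By definition,
\[
\chi^2_{\normal(0,I)}(\normal(0,\Sigma_1),\normal(0,\Sigma_2)) = \int_{\R^d} \frac{g_{\Sigma_1}(x)\,g_{\Sigma_2}(x)}{g_I(x)}\,dx.
\]
The integrand is $(2\pi)^{-d/2}(\det\Sigma_1)^{-1/2}(\det\Sigma_2)^{-1/2}\exp\!\big(-\tfrac12 x^\top M x\big)$ where $M \eqdef \Sigma_1^{-1} + \Sigma_2^{-1} - I$. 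So the whole problem reduces to a Gaussian integral: provided $M \succ 0$, the integral equals $(\det\Sigma_1)^{-1/2}(\det\Sigma_2)^{-1/2}(\det M)^{-1/2}$, using $\int (2\pi)^{-d/2}\exp(-\tfrac12 x^\top M x)\,dx = (\det M)^{-1/2}$.

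The remaining task is purely algebraic: simplify $(\det\Sigma_1)(\det\Sigma_2)(\det M)$ to $\det(\Sigma_1+\Sigma_2-\Sigma_1\Sigma_2)$. Here I would factor $M = \Sigma_1^{-1} + \Sigma_2^{-1} - I = \Sigma_1^{-1}\big(\Sigma_1 + \Sigma_1\Sigma_2^{-1}\Sigma_2 \cdot(\cdots)\big)$ — more cleanly, write $M = \Sigma_1^{-1}\big(I + \Sigma_1\Sigma_2^{-1} - \Sigma_1\big)\ $? That is not symmetric in the right way, so instead I would use $\det(M) = \det(\Sigma_1^{-1})\det(\Sigma_2^{-1})\det\big(\Sigma_2 + \Sigma_1 - \Sigma_1\Sigma_2\big)$ by pulling out $\Sigma_1^{-1}$ on the left and $\Sigma_2^{-1}$ on the right: indeed $\Sigma_1^{-1}(\Sigma_2 + \Sigma_1 - \Sigma_1\Sigma_2)\Sigma_2^{-1} = \Sigma_2^{-1} + \Sigma_1^{-1} - I = M$. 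Taking determinants of both sides gives $\det M = (\det\Sigma_1)^{-1}(\det\Sigma_2)^{-1}\det(\Sigma_1 + \Sigma_2 - \Sigma_1\Sigma_2)$, and multiplying through by $(\det\Sigma_1)(\det\Sigma_2)$ yields exactly the claimed formula after taking the $-1/2$ power.

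The one genuine technical point — the main obstacle, such as it is — is justifying that the Gaussian integral converges, i.e. that $M = \Sigma_1^{-1} + \Sigma_2^{-1} - I \succ 0$. This is where the hypothesis $\Sigma_1,\Sigma_2 \prec 2I$ enters: it gives $\Sigma_i^{-1} \succ \tfrac12 I$, hence $M \succ \tfrac12 I + \tfrac12 I - I = 0$, so $M$ is positive definite and the integral is finite. (If $M$ had a non-positive eigenvalue the integral would diverge and the chi-squared inner product would be $+\infty$, consistent with $\det(\Sigma_1+\Sigma_2-\Sigma_1\Sigma_2) = (\det\Sigma_1)(\det\Sigma_2)\det M \le 0$ formally.) I would also note in passing that $\Sigma_1 + \Sigma_2 - \Sigma_1\Sigma_2$ need not be symmetric, but its determinant is what appears and equals $\det M$ times positive factors, so there is no issue; alternatively one can symmetrize via the similarity $\Sigma_1^{1/2}M\Sigma_1^{1/2}$ if a cleaner statement is wanted. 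Everything else is bookkeeping with the normalizing constants $(2\pi)^{-d/2}$ and the determinant factors.
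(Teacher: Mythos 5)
Your proof is correct and follows essentially the same route as the paper's: write out the densities, reduce to a Gaussian integral with precision matrix $M = \Sigma_1^{-1}+\Sigma_2^{-1}-I$, and convert $(\det \Sigma_1 \det \Sigma_2 \det M)^{-1/2}$ into the stated form via the factorization $\Sigma_1^{-1}(\Sigma_1+\Sigma_2-\Sigma_1\Sigma_2)\Sigma_2^{-1}=M$. Your explicit justification that $\Sigma_i \prec 2I$ forces $M \succ 0$ (so the integral converges) is a detail the paper leaves implicit, and is a welcome addition.
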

\begin{proof}
Letting $p(x),p_1(x),p_2(x)$ be the probability density functions for $\normal(0,I)$, $\normal(0,\Sigma_1)$ and $\normal(0,\Sigma_2)$, respectively. We then have that
\begin{align*}
\chi^2_{\normal(0,I)}&(\normal(0,\Sigma_1),\normal(0,\Sigma_2))\\
& = \int \frac{p_1(x)p_2(x)}{p(x)} dx\\
& = \int \frac{  (2\pi)^{-d/2} (\det(\Sigma_1))^{-1/2} \exp(-x^T \Sigma_1^{-1} x/2)(2\pi)^{-d/2} (\det(\Sigma_2))^{-1/2} \exp(-x^T \Sigma_2^{-1} x/2)}{(2\pi)^{-d/2} \exp(-x^Tx/2)}dx\\
& = (\det(\Sigma_1\Sigma_2))^{-1/2} \int (2\pi)^{-d/2} \exp(-x^T (\Sigma_1^{-1}+\Sigma_2^{-1}-I) x/2)dx\\
& = (\det(\Sigma_1\Sigma_2))^{-1/2} (\det(\Sigma_1^{-1}+\Sigma_2^{-1}-I))^{-1/2}\\
& = (\det(\Sigma_1+\Sigma_2-\Sigma_1\Sigma_2))^{-1/2} \;.
\end{align*}
This completes the proof.
\end{proof}

We can then approximate this quantity using Taylor expansion. 
The result is particularly nice if the covariances are $I+A$ and $I+B$, 
where $A$ and $B$ have small operator norms.

\begin{lemma}\label{ChiSquaredCor}
If $A,B$ are symmetric \new{$d \times d$} matrices with $\|A\|_2,\|B\|_2 = O(1/\sqrt{d})$, then
$$
\chi^2_{\normal(0,I)}(\normal(0,I+A),\normal(0,I+B)) = (1+\tr(AB)/2+O(\tr(AB)^2+\tr((AB)^2)+1/d^2)).
$$
\end{lemma}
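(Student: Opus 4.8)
The plan is to start from Lemma~\ref{ChiSquaredLem} and apply it with $\Sigma_1 = I+A$ and $\Sigma_2 = I+B$. The hypothesis $\|A\|_2,\|B\|_2 = O(1/\sqrt d)$ ensures that, for $d$ larger than an absolute constant (the only non-vacuous regime), both matrices are positive definite and satisfy $\Sigma_i \prec 2I$, so the lemma applies. The one algebraic observation is the identity
\[
\Sigma_1+\Sigma_2-\Sigma_1\Sigma_2 = (I+A)+(I+B)-(I+A)(I+B) = I - AB \,,
\]
which turns Lemma~\ref{ChiSquaredLem} into $\chi^2_{\normal(0,I)}(\normal(0,I+A),\normal(0,I+B)) = (\det(I-AB))^{-1/2}$.

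Next I would convert the determinant into a sum of traces. Since $\|AB\|_2 \le \|A\|_2\|B\|_2 = O(1/d) < 1$, the standard identity $\det(I-M) = \exp(\tr\log(I-M)) = \exp\bigl(-\sum_{k\ge 1}\tr(M^k)/k\bigr)$ is valid for $M=AB$ (the matrix logarithm series converges in operator norm, and all the relevant matrices commute), giving
\[
\chi^2_{\normal(0,I)}(\normal(0,I+A),\normal(0,I+B)) = \exp\Bigl(\tfrac12\tr(AB) + \tfrac14\tr\bigl((AB)^2\bigr) + R\Bigr), \qquad R := \tfrac12\sum_{k\ge 3}\frac{\tr\bigl((AB)^k\bigr)}{k} \,.
\]
The main quantitative estimates I then need are: (i)~$|\tr(AB)| = O(1)$; (ii)~$|\tr((AB)^2)| = O(1/d)$; and (iii)~$|R| = O(1/d^2)$. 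All three follow from the Schatten-norm bound $|\tr((AB)^k)| = |\tr((AB)^{k-1}(AB))| \le \|(AB)^{k-1}\|_2\,\|AB\|_\ast \le \|AB\|_2^{k-1}\,\|AB\|_\ast$, where $\|\cdot\|_\ast$ is the nuclear norm, combined with $\|AB\|_\ast \le \|A\|_F\|B\|_F = O(1)$; for (iii) one additionally sums the geometric series $\sum_{k\ge3}\|AB\|_2^{k-1} = O(\|AB\|_2^2) = O(1/d^2)$. The inequality $\|A\|_F \le \sqrt d\,\|A\|_2 = O(1)$ for a $d\times d$ matrix is exactly what makes $\|AB\|_\ast$ bounded, and this is the one genuinely matrix-analytic ingredient.

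Finally I would expand the exponential, being careful that $\tfrac12\tr(AB)$ is only $O(1)$ and not $o(1)$, so it cannot be linearized directly. Writing the exponent as $\tfrac12\tr(AB) + r$ with $r := \tfrac14\tr((AB)^2) + R$, so that $|r| = O(1/d)$, gives
\[
\chi^2_{\normal(0,I)}(\normal(0,I+A),\normal(0,I+B)) = \exp\bigl(\tfrac12\tr(AB)\bigr)\cdot\exp(r) = \Bigl(1 + \tfrac12\tr(AB) + O\bigl(\tr(AB)^2\bigr)\Bigr)\bigl(1 + r + O(r^2)\bigr) \,,
\]
using $e^x = 1 + x + O(x^2)$ for bounded $x$. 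Multiplying out: the leading terms give $1 + \tfrac12\tr(AB)$; the term $r = \tfrac14\tr((AB)^2) + O(1/d^2)$ falls into the allowed error; the cross term $\tfrac12\tr(AB)\,r$ equals $\tfrac18\tr(AB)\tr((AB)^2) + O(1/d^2) = O(\tr((AB)^2)) + O(1/d^2)$ since $|\tr(AB)| = O(1)$; and every remaining term carries an extra factor of $O(1/d)$ (landing in $O(1/d^2)$ or $O(\tr((AB)^2))$) or is of order $\tr(AB)^2$. Collecting, this is exactly $1 + \tr(AB)/2 + O(\tr(AB)^2 + \tr((AB)^2) + 1/d^2)$.

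The hard part will be precisely this last bookkeeping: because $\tr(AB)$ itself need not be small, one must factor the exponential as above rather than take a single Taylor expansion, and then verify that no cross term survives without either a spare $O(1/d)$ or a full $\tr(AB)^2$ factor. Everything else is the determinant identity, the simple algebra $\Sigma_1+\Sigma_2-\Sigma_1\Sigma_2 = I-AB$, and the Schatten/Frobenius norm estimate together with a geometric series.
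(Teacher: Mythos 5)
Your proposal is correct and follows essentially the same route as the paper: apply Lemma~\ref{ChiSquaredLem} to get $(\det(I-AB))^{-1/2}$, expand the determinant as $\exp(-\sum_m \tr((AB)^m)/m)$, bound $|\tr((AB)^m)|=O(1/d^{m-1})$ (the paper does this implicitly via the eigenvalues of $AB$, you via a Schatten--H\"older bound, which amounts to the same estimate), and then expand the exponential keeping track of the fact that $\tr(AB)$ is only $O(1)$. Your write-up is somewhat more careful about the final bookkeeping and about checking the hypotheses of Lemma~\ref{ChiSquaredLem}, but there is no substantive difference in the argument.
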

\begin{proof}
Applying Lemma \ref{ChiSquaredLem} gives that the term in question is
$$
(\det((I+A)+(I+B)-(I+A)(I+B)))^{-1/2} = (\det(I-AB))^{-1/2}.
$$
Suppose that $AB$ has eigenvalues $\lambda_1,\lambda_2,\ldots,\lambda_{\new{d}}$. Then
\begin{align*}
\det(I-AB) = \prod_{i=1}^{\new{d}} (1-\lambda_i)
& = \exp \left(-\sum_{i=1}^{\new{d}}\sum_{m=1}^\infty \lambda_i^m/m \right)\\
& = \exp\left(-\sum_{m=1}^\infty\sum_{i=1}^{\new{d}} \lambda_i^m/m \right)\\
& = \exp \left(-\sum_{m=1}^\infty \tr((AB)^m)/m \right) \;.
\end{align*}
We note that $\tr((AB)^m) = O(1/d)^{m-1}.$ Thus, this expression is
$$
\exp(-\tr(AB))(1-\tr((AB)^2)/2)(1+O(1/d^2)).
$$
Therefore,
$$
\chi^2_{\normal(0,I)}(\normal(0,I+A),\normal(0,I+B)) = (1+\tr(AB)/2+O(\tr(AB)^2+\tr((AB)^2)+1/d^2)) \;.
$$
This proves our lemma.
\end{proof}

The key noisy Gaussians that will show up in our adversarial ensemble will be of the following form:

\begin{definition}
Let $A$ be a symmetric matrix. Define the probability distribution $\dis{A}$ as follows:
$$
\dis{A} = (9/10)\normal(0,I+ A)+(1/10)\normal(0,I-9 A) \;.
$$
\end{definition}

Notice that this is carefully chosen so that the average covariance of these two components is exactly the identity.

Next we need to compute the chi-squared inner product of two of these $\dis{A}$ distributions with respect to the standard Gaussian. This is not hard as we already know the inner products of the Gaussian components.

\begin{lemma}\label{FinalChiSquaredLem}
For $A$ and $B$ be \new{$d \times d$} matrices with $\|A\|_2,\|B\|_2 = O(1/\sqrt{d})$, we have that
$$
\chi^2_{\normal(0,I)}(\dis{A},\dis{B}) = 1 + O(((\tr(AB)^2+\tr((AB)^2)+1/d^2)).
$$
\end{lemma}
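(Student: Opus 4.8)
The plan is to expand $\dis{A}$ and $\dis{B}$ as mixtures and use bilinearity of the chi-squared inner product over the mixture components. Writing $\dis{A} = (9/10)\normal(0,I+A) + (1/10)\normal(0,I-9A)$ and similarly for $B$, we get
\begin{align*}
\chi^2_{\normal(0,I)}(\dis{A},\dis{B}) = &\ (81/100)\,\chi^2_{\normal(0,I)}(\normal(0,I+A),\normal(0,I+B)) \\
&+ (9/100)\,\chi^2_{\normal(0,I)}(\normal(0,I+A),\normal(0,I-9B)) \\
&+ (9/100)\,\chi^2_{\normal(0,I)}(\normal(0,I-9A),\normal(0,I+B)) \\
&+ (1/100)\,\chi^2_{\normal(0,I)}(\normal(0,I-9A),\normal(0,I-9B)) \;.
\end{align*}
Here I would need to verify that bilinearity holds, which is immediate from the definition $\chi^2_A(B,C) = \int dB\,dC/dA$ applied to the mixture densities. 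I also note that the operator norm hypotheses $\|A\|_2, \|B\|_2 = O(1/\sqrt d)$ carry over to $-9A, -9B$ (still $O(1/\sqrt d)$), so Lemma~\ref{ChiSquaredCor} applies to each of the four terms.

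Next I would apply Lemma~\ref{ChiSquaredCor} to each of the four chi-squared inner products above. Each term has the form $1 + \tr(A'B')/2 + O(\tr(A'B')^2 + \tr((A'B')^2) + 1/d^2)$ for the appropriate pair $(A',B') \in \{(A,B), (A,-9B), (-9A,B), (-9A,-9B)\}$. Since $\tr((cA)(c'B)) = cc'\tr(AB)$ and the analogous scaling holds for the quadratic error terms $\tr(A'B')^2$ and $\tr((A'B')^2)$ (these scale by $(cc')^2$), all four error terms are $O(\tr(AB)^2 + \tr((AB)^2) + 1/d^2)$. So, collecting the main terms, the constant terms contribute $(81 + 9 + 9 + 1)/100 = 1$, and the linear-in-trace terms contribute
$$
\frac{1}{2}\left(\frac{81}{100}\tr(AB) - \frac{9\cdot 9}{100}\tr(AB) - \frac{9\cdot 9}{100}\tr(AB) + \frac{81}{100}\tr(AB)\right) = \frac{\tr(AB)}{2}\cdot\frac{81 - 81 - 81 + 81}{100} = 0 \;.
$$
This cancellation is the point of the ``carefully chosen'' mixture: the $\tr(AB)/2$ term vanishes because the signed weights $(9/10)(1) + (1/10)(-9) = 0$ make the average covariance the identity, and this same weighting kills the first-order term in the chi-squared expansion.

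Putting these together gives $\chi^2_{\normal(0,I)}(\dis{A},\dis{B}) = 1 + O(\tr(AB)^2 + \tr((AB)^2) + 1/d^2)$, which is exactly the claimed bound. The main thing to be careful about — though it is routine rather than deep — is confirming that the big-$O$ error terms from Lemma~\ref{ChiSquaredCor} genuinely scale polynomially under $A \mapsto cA$ so that replacing $B$ by $-9B$ only changes constants, and that the four error contributions can all be absorbed into a single $O(\tr(AB)^2 + \tr((AB)^2) + 1/d^2)$; since $9$ is an absolute constant this presents no real difficulty. I expect no genuine obstacle here: the lemma is a short computation given the preceding two lemmas, with the only substantive content being the exact vanishing of the first-order term, which follows from the defining property of $\dis{A}$.
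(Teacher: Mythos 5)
Your proposal is correct and follows exactly the paper's argument: expand the chi-squared inner product bilinearly over the four pairs of mixture components, apply Lemma~\ref{ChiSquaredCor} to each, and observe that the $\tr(AB)/2$ terms cancel by the choice of mixture weights. The paper's proof is terser but identical in substance; your explicit verification of the cancellation and of the constant-factor scaling of the error terms under $A \mapsto -9A$ is exactly the routine bookkeeping the paper leaves implicit.
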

\begin{proof}
We have that $\chi^2_{\normal(0,I)}(\dis{A},\dis{B})$ equals
\begin{align*}
& (9/10)^2\chi^2_{\normal(0,I)}(\normal(0,I+ A),\normal(0,I+ B)+ (1/10)(9/10)\chi^2_{\normal(0,I)}(\normal(0,I+A),\normal(0,I-9 B)) \\
+ & (1/10)(9/10)\chi^2_{\normal(0,I)}(\normal(0,I-9 A),\normal(0,I+ B) + (1/10)^2\chi^2_{\normal(0,I)}(\normal(0,I-9 A),\normal(0,I-9 B)).
\end{align*}
We expand each term using Lemma~\ref{ChiSquaredCor} and note that the $\tr(AB)$ terms all cancel. The remaining terms are as desired.
\end{proof}

We can now define our adversarial ensemble that will be hard to distinguish from a standard Gaussian.

\begin{definition}[Hard Family of Distributions] \label{def:hard-ensemble}
Let $\mathcal{D}$ be the following ensemble. 
Pick a symmetric matrix $A$ whose diagonal entries are $0$ and whose off-diagonal entries are 
$\mathcal{N}(0,1/d)$ random variables, independent except for the symmetry, all conditioned on $\|A\|_2 = O(1/\sqrt{d})$ and $\|A\|_F = O(1)$ (note that these both happen with high probability). Let $\mathcal{D}$ return the distribution $\dis{A}$.

Let $\mathcal{D}^{N}$ denote the distribution over $\R^{d\times N}$ obtained by picking a random distribution $X$ from $\mathcal{D}$ and taking $N$ i.i.d. samples from it.
\end{definition}

Let $G=\normal(0,I)$ and $G^N$ denote the distribution obtained by taking $N$ i.i.d. samples from $G$.

We prove the following crucial proposition:

\begin{proposition} \label{prop:chi-squared-bound}
For $N = o(d^2)$, we have that
$$
\chi^2_{G^N}(\mathcal{D}^N,\mathcal{D}^N) = 1+o(1) \;.
$$
\end{proposition}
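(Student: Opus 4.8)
The plan is to follow the approach sketched in the proof overview. The first step is the tensorization identity: since $\mathcal{D}^N=\E_{A\sim\mathcal{D}}[\dis{A}^{\otimes N}]$ and $G^N=\normal(0,I)^{\otimes N}$, expanding the chi-squared inner product and using Fubini together with the product structure of $G^N$ gives
$$
\chi^2_{G^N}(\mathcal{D}^N,\mathcal{D}^N)=\E_{A,B\sim\mathcal{D}}\!\left[\bigl(\chi^2_{\normal(0,I)}(\dis{A},\dis{B})\bigr)^{N}\right],
$$
where $A$ and $B$ are independent draws of the ensemble of Definition~\ref{def:hard-ensemble}. Every such pair lies in the support of $\mathcal{D}$, hence satisfies the hypotheses of Lemma~\ref{FinalChiSquaredLem}, so $\chi^2_{\normal(0,I)}(\dis{A},\dis{B})=1+\mathrm{err}(A,B)$ with $|\mathrm{err}(A,B)|\le c_0\,\Delta(A,B)$ for an absolute constant $c_0$, where $\Delta(A,B):=\tr(AB)^2+|\tr((AB)^2)|+1/d^2\ge 0$. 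A chi-squared inner product is nonnegative, so $\bigl(\chi^2_{\normal(0,I)}(\dis{A},\dis{B})\bigr)^{N}\le(1+c_0\Delta(A,B))^{N}$, and it suffices to prove $\E_{A,B\sim\mathcal{D}}[(1+c_0\Delta(A,B))^{N}]=1+o(1)$; the matching lower bound $\chi^2_{G^N}(\mathcal{D}^N,\mathcal{D}^N)\ge 1$ is automatic from Cauchy--Schwarz.

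The heart of the argument is a sub-exponential tail bound for $\Delta$: I would show that $\pr_{A,B\sim\mathcal{D}}[\Delta(A,B)>t]\le C\exp(-c\,t\,d^{2})$ for all $t\ge 3/d^{2}$ (and trivially $\le 1$ below), with absolute constants $C,c>0$. Since the events defining $\mathcal{D}$ hold with probability $1-o(1)$, up to a $1+o(1)$ factor I may estimate probabilities over the raw ensemble, with $B$ conditioned on $\{\|B\|_F=O(1),\ \|B\|_2=O(1/\sqrt d)\}$ and the entries of $A$ i.i.d.\ centered Gaussians. Conditioned on $B$, $\tr(AB)=\sum_{i\ne j}A_{ij}B_{ij}$ is a centered Gaussian of variance $O(\|B\|_F^2/d^2)=O(1/d^2)$, so $\pr[\tr(AB)^2>u]\le 2\exp(-\Omega(ud^2))$. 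The harder term, $\tr((AB)^2)=\sum_{i,j,k,\ell}A_{ij}B_{jk}A_{k\ell}B_{\ell i}$, is, conditioned on $B$, a Gaussian quadratic form in the entries of $A$ with some coefficient matrix $Q_B$; reading $Q_B$ off from this expression, one checks $\tr(Q_B)=O(\|B\|_F^2)=O(1)$, $\|Q_B\|_F\le\|B\|_F^2=O(1)$, and $\|Q_B\|_2\le\|B\|_2^2=O(1/d)$, and its mean is $\E[\tr((AB)^2)\mid B]=O(1/d^2)$. Feeding these into a Hanson--Wright-type inequality (or bounding the moment generating function directly) gives $\pr[|\tr((AB)^2)|>u]\le 2\exp(-\Omega(\min(u^2d^4,ud^3)))\le 2\exp(-\Omega(ud^2))$ for $u\ge 1/d^2$. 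A union bound over the three summands of $\Delta$ yields the claimed tail.

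Given the tail bound, I would finish by integrating. Writing $\E[(1+c_0\Delta)^{N}]=1+\int_0^\infty c_0N(1+c_0t)^{N-1}\,\pr[\Delta>t]\,dt$ and splitting at $t=3/d^2$: the integral over $[0,3/d^2]$ is at most $c_0N(1+3c_0/d^2)^{N-1}(3/d^2)\le(3c_0N/d^2)\,e^{3c_0N/d^2}=o(1)$, since $N=o(d^2)$; over $[3/d^2,\infty)$, using $(1+c_0t)^{N-1}\le e^{c_0Nt}$ and the tail bound, it is at most $c_0CN\int_{3/d^2}^\infty e^{-(cd^2-c_0N)t}\,dt$, and since $N=o(d^2)$ we have $cd^2-c_0N\ge(c/2)d^2$ for all large $d$, making this $O(N/d^2)=o(1)$ as well. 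Hence $\E_{A,B\sim\mathcal{D}}[(1+c_0\Delta)^{N}]=1+o(1)$, which together with the first paragraph gives $\chi^2_{G^N}(\mathcal{D}^N,\mathcal{D}^N)=1+o(1)$.

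The step I expect to be the main obstacle is the concentration of $\tr((AB)^2)$ at scale $1/d^2$: the best deterministic bound on the support, $|\tr((AB)^2)|\le\|A\|_2^2\|B\|_F^2=O(1/d)$, is too weak, and if $\Delta$ were only controlled at scale $1/d$ the integration above would succeed only for $N=o(d)$ rather than $N=o(d^2)$. Pushing this down to the optimal $1/d^2$ is exactly the ``careful analysis of these polynomials in the coefficients of $A$ and $B$'' referred to in the overview, i.e.\ the norm bounds on $Q_B$ above. A secondary but routine point is making the tensorization identity and the passage from the conditioned ensemble $\mathcal{D}$ to the raw Gaussian-entry ensemble precise.
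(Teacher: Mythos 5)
Your proposal is correct and follows essentially the same route as the paper: the tensorization identity, Lemma~\ref{FinalChiSquaredLem}, Gaussian/Hanson--Wright concentration giving $\exp(-\Omega(td^2))$ tails for $\tr(AB)^2$ and $\tr((AB)^2)$, and then integrating against $(1+c_0t)^N$ with $N=o(d^2)$. The only differences are cosmetic -- the paper packages the tail bound as stochastic domination by $O(\normal(0,1)^2/d^2)$ and evaluates a Gaussian moment generating function, while you integrate the tail directly; your explicit handling of the sign of $\tr((AB)^2)$ and of the conditioning on the high-probability event is if anything slightly more careful than the paper's.
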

\begin{proof}
We begin by noting that
$$
\chi^2_{G^N}(\mathcal{D}^N,\mathcal{D}^N) = \E_{\dis{A},\dis{B}\sim \mathcal{D}}[\chi^2_{G^N}(\dis{A}^N,\dis{B}^N)] = \E_{\dis{A},\dis{B}\sim \mathcal{D}}[\chi^2_{G}(\dis{A},\dis{B})^N] \;.
$$
Applying Lemma \ref{FinalChiSquaredLem}, this is
$$
\E_{\dis{A},\dis{B}\sim \mathcal{D}}[(1+ O(\tr(AB)^2+\tr((AB)^2)+O(1/d^2)))^N].
$$
In order to bound this quantity, we wish to show that $\tr(AB)^2$ and $\tr((AB)^2)$ 
are both small with high probability. We note that $A$ and $B$ are both random Gaussian 
symmetric matrices conditioned on some high probability event \new{$\mathcal{E}$}.

We first note that fixing $A$ without conditioning on \new{$\mathcal{E}$}, we have that $\tr(AB)$ is distributed 
as a normal random variable with standard deviation $O(\|A\|_F / d) = O(1/d).$ 
Therefore, $\tr(AB)^2 \leq t$, except with probability $\exp(-\Omega(d^2 t))$.

Next we wish to similarly understand the distribution of $\tr((AB)^2).$ We note that 
$$\tr((AB)^2) = \sum_{i,j,k,\ell} A_{ij}B_{jk}A_{k\ell}B_{\ell i}$$ is a quadratic polynomial in each of $A$ and $B$. 
All of the terms except for those with $\{i,j\}=\{k,\ell\}$ have mean $0$, 
and the remaining terms have mean $O(1/d^2).$ 
Thinking of $B$ as fixed, $A_{ij}A_{k \ell}$ has coefficient $B_{jk}B_{\ell i} + B_{ik}B_{j\ell}$. Thus, as a polynomial in $A$, the variance is $O(\|B\|_F^2/d^2)=O(1/d^2)$. Therefore, by standard concentration inequalities, the probability that $\tr((AB)^2) > t$ is $\exp(-\Omega(d^2 t))$.

Hence, we have shown that $\tr(AB)^2+\tr((AB)^2)>t$ with probability $\exp(-\Omega(d^2 t))$. 
In particular, it is stochastically dominated by $O(\normal(0,1)^2/d^2)$.

Back to our original problem, we wish to bound
$$
\E_{\dis{A},\dis{B}\sim \mathcal{D}}[(1+ O(\tr(AB)^2+\tr((AB)^2))+O(1/d^2))^N].
$$
By the above, this is less than
$$
\E[(1+O(\normal(0,1)^2/d^2)+O(1/d^2))^N] \leq \E[\exp(O(N/d^2)\normal(0,1)^2 + O(N/d^2))] = 1+o(1) \;,
$$
which completes the proof.
\end{proof}

We can now prove our main theorem:
\begin{theorem}\label{main_theorem}
There is no algorithm that distinguishes between $\normal(0,I)$ on $\R^d$ and a distribution $X$ 
obtained by adding $1/10$ additive noise to $\mathcal{N}(0,\Sigma)$ for some $\Sigma$ with $\|\Sigma-I\|_F > 1/2$, 
using $N=o(d^2)$ samples.
\end{theorem}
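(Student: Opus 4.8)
The plan is to combine Proposition~\ref{prop:chi-squared-bound} with the elementary Fact relating total variation distance to the chi-squared inner product, and to check that the hard ensemble $\mathcal{D}$ from Definition~\ref{def:hard-ensemble} genuinely lives in the ``soundness'' regime, i.e. that every distribution $\dis{A}$ it outputs is of the claimed form (a $1/10$-contamination of a Gaussian $\mathcal{N}(0,\Sigma)$ with $\|\Sigma - I\|_F > 1/2$). The argument is a standard indistinguishability/Le Cam-style reduction: if some algorithm $\mathcal{A}$ distinguished $G^N$ from $X^N$ (for $X$ drawn from $\mathcal{D}$) with probability $2/3$ using $N = o(d^2)$ samples, then $\dtv(G^N, \mathcal{D}^N) \geq 1/3 - o(1)$, whereas Proposition~\ref{prop:chi-squared-bound} together with the Fact gives $\dtv(G^N,\mathcal{D}^N) \leq (1/2)\sqrt{\chi^2_{G^N}(\mathcal{D}^N,\mathcal{D}^N) - 1} = o(1)$, a contradiction.

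Concretely, first I would note that for $A$ drawn from $\mathcal{D}$, the distribution $\dis{A} = (9/10)\mathcal{N}(0,I+A) + (1/10)\mathcal{N}(0,I-9A)$ is exactly a $1/10$-contaminated version of $\mathcal{N}(0, I+A)$, so we may take $\Sigma = I+A$ and $B = \mathcal{N}(0, I-9A)$ as the noise distribution; here I should make sure $I+A$ and $I - 9A$ are positive definite, which holds because the conditioning event $\mathcal{E}$ forces $\|A\|_2 = O(1/\sqrt d) = o(1)$. Second, I would verify the soundness quantitative requirement $\|\Sigma - I\|_F = \|A\|_F > 1/2$: since the off-diagonal entries of $A$ are i.i.d. $\mathcal{N}(0,1/d)$, we have $\E[\|A\|_F^2] = d(d-1)/d = d-1$, and $\|A\|_F^2$ concentrates sharply around $d-1$, so $\|A\|_F > 1/2$ with probability $1 - o(1)$; by restricting the ensemble to this further high-probability event (folding it into $\mathcal{E}$), every output of $\mathcal{D}$ satisfies the soundness condition. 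Third, I would invoke the Fact and Proposition~\ref{prop:chi-squared-bound} to bound $\dtv(G^N, \mathcal{D}^N) = o(1)$ for $N = o(d^2)$. Finally, I would spell out the Le Cam reduction: any tester, run on $\mathcal{D}^N$, defines an event whose probability under $G^N$ and under $\mathcal{D}^N$ differ by at most $\dtv(G^N,\mathcal{D}^N) = o(1) < 1/3$, so it cannot have the required success probability on both the null and the (mixture over the) alternative; hence no such tester exists.

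I expect the main conceptual point — rather than a technical obstacle, since Proposition~\ref{prop:chi-squared-bound} does the heavy lifting — to be the bookkeeping that the averaging over $\mathcal{D}$ in the alternative is legitimate: a lower bound against distinguishing $G$ from the \emph{mixture} $\mathcal{D}^N$ immediately yields a lower bound against distinguishing $G$ from the worst $\dis{A}$, because if an algorithm succeeds for every $\dis{A}$ in the support it succeeds in expectation over $A \sim \mathcal{D}$. The only place one must be slightly careful is ensuring the conditioning events (small operator norm, bounded Frobenius norm, and $\|A\|_F > 1/2$) are \emph{simultaneously} high-probability so that $\mathcal{D}$ is well-defined and supported entirely on valid soundness instances; all three are standard Gaussian-matrix concentration statements and were already asserted (modulo the $\|A\|_F>1/2$ lower bound, which I add above) in Definition~\ref{def:hard-ensemble}.
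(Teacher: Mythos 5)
Your proposal is correct and follows essentially the same route as the paper: the paper's proof of this theorem is precisely the one-line reduction to Proposition~\ref{prop:chi-squared-bound} that you spell out (via the Fact relating $\dtv$ to $\chi^2$ and the averaging argument over the ensemble), and the soundness bookkeeping you add --- positive definiteness of $I+A$ and $I-9A$, and $\|A\|_F>1/2$ with high probability --- is left implicit in Definition~\ref{def:hard-ensemble}. One minor note: your computation $\E[\|A\|_F^2]=d-1$ takes the paper's ``$\mathcal{N}(0,1/d)$'' literally as variance $1/d$, whereas the conditioning $\|A\|_F=O(1)$, $\|A\|_2 = O(1/\sqrt{d})$ and the later variance calculations only make sense for entries of standard deviation $\Theta(1/d)$ (giving $\E[\|A\|_F^2]\approx 1$); under either reading $\|A\|_F>1/2$ holds with high probability, so your conclusion stands.
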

\begin{proof}
We note that such an algorithm could reliably distinguish between a sample from $G^N$ and $\mathcal{D}^N$, 
which contradicts Proposition~\ref{prop:chi-squared-bound}.
\end{proof}

\begin{remark}
We note that the constants $1/10$ and $1/2$ appearing in the statement of Theorem \ref{main_theorem} 
can be replaced  by any other constants $\eps$ and $C$, and the result will still hold by the same argument. 
The constants in the sample lower bound will depend on $\eps$ and $C$. 
This can be achieved by defining the distribution $\dis{A}$ to be $(1-\eps)\mathcal{N}(0,1+A) + \eps \mathcal{N}(0,1-((1-\eps)/\eps)A)$, 
and picking the ensemble $\mathcal{D}$ to return a $\dis{A}$ for $A$ a random Gaussian matrix with entries of size approximately $2C/d$. 
The arguments given in the rest of the proof still hold as is.
\end{remark}

\bibliographystyle{alpha}
\bibliography{allrefs}

\end{document}